% ----------------------------------------------------------------------------------
% -------------------------------------- ICRA --------------------------------------
% ----------------------------------------------------------------------------------
\documentclass[letterpaper, 10 pt, conference]{style/ieeeconf} % Comment this line out if you need a4paper
\IEEEoverridecommandlockouts % This command is only needed if you want to use the \thanks command.
\overrideIEEEmargins % Needed to meet printer requirements.
\pdfminorversion=4 % Required for ICRA upload. Otherwise, corrupt PDF.
% ----------------------------------------------------------------------------------
% ----------------------------------------------------------------------------------
% ----------------------------------------------------------------------------------

% ----------------------------------------------------------------------------------
% ----------------------------- Kyle's Helper Commands -----------------------------
% ----------------------------------------------------------------------------------
% Include packages for beautification.
\usepackage{latexsym}
\usepackage{amsmath}                % For Math Proofs
\usepackage{amssymb}                % For Math Proofs
\usepackage{amsfonts}               % For Math Proofs
\usepackage{algorithm}              % For Algorithms
\usepackage{algorithmicx}           % For Algorithms
\usepackage[noend]{algpseudocode}   % For Algorithms
\usepackage{graphicx}               % For Graphics/Figures
\usepackage{float}                  % For Graphics/Figures
\usepackage{epstopdf}               % For Converting EPS->PDF
\usepackage[abs]{overpic}           % For Drawing On Figures
\usepackage{pict2e}                 % For Drawing On Figures

\usepackage{color}                  % For Color Commands.
\usepackage{array}                  % For Table Centering Inside Rows.
\usepackage{multirow}               % For \multirow in Tables.
\usepackage{tikz}                   % For Graphical Models.
\usetikzlibrary{automata,bayesnet}  % For Policy Networks.

\usepackage{pgfplots}               % For Plotting with LaTeX.
\pgfplotsset{compat = newest}

% Create some formatting and definitions.
%\theoremstyle{plain}
\newtheorem{theorem}{Theorem}

\newtheorem{proposition}{Proposition}

\DeclareMathOperator*{\argmax}{argmax}

% Better math padding, universally applied.
\setlength{\thinmuskip}{2mu}    % Default is 3mu. Governs other math environment padding (e.g., spaces after commas, etc.).
\setlength{\medmuskip}{2mu}     % Default is 4mu. Governs binary operator math environment padding (e.g., +, -, etc.).
\setlength{\thickmuskip}{2mu}   % Default is 10mu. Governs relational operator math environment padding (e.g. <, =, etc.).

% Stretch a paragraph to fill the width for formatting.

% For making temporary boxes.

% Create some helpful new looping commands for tikz.

\newcommand{\verticaledge}[3]{\draw[->] (#1) to node[right]{#2} (#3)}
\newcommand{\horizontaledge}[3]{\draw[->] (#1) to node[above]{#2} (#3)}

\newcommand{\forwardslantedgebelow}[3]{\draw[->] (#1) to node[below right]{#2} (#3)}
\newcommand{\backwardslantedgebelow}[3]{\draw[->] (#1) to node[below left]{#2} (#3)}

% Do-while requires its own custom definition.
\algdef{SE}[DOWHILE]{Do}{doWhile}{\algorithmicdo}[1]{\algorithmicwhile\ #1}%

% New commands for paper's notation.

\renewcommand{\vec}{\mathbf}

\newcommand{\maximize}[1]{\underset{#1}{\text{maximize}}}

\newcommand{\subjectto}{\text{subject to}}

% Some helpful shorthand for DAG vertex subsets.

% Change algorithmicx comment format.
\algrenewcommand{\algorithmiccomment}[1]{\hfill\textit{// #1}}

% Avoid word wrapping. Don't use in camera ready!
%\hyphenpenalty=1000

% Optional color highlighting for authors.

% ----------------------------------------------------------------------------------
% ----------------------------------------------------------------------------------
% ----------------------------------------------------------------------------------

% ----------------------------------------------------------------------------------
% -------------------------------------- ICRA --------------------------------------
% ----------------------------------------------------------------------------------
%% --- SUBMISSION VERSION (1/3) ---
%\title{\LARGE \bf%
%Multi-Objective Policy Gradients With Topological Constraints%%% Title Location 1/3
%}
%\author{ICRA 2021 Submission \#XXXX}

%\title{}
%\author{Kyle Hollins Wray, Stas Tiomkin, Pieter Abbeel}
%date{2021}

% TODO: Think about adding "function approximation" to the title.
\title{\LARGE \bf%
Multi-Objective Policy Gradients with Topological Constraints%%% Title Location 2/3
}
\author{Kyle Hollins Wray,$^{*1}$ Stas Tiomkin,$^{*2}$ Mykel J. Kochenderfer,$^1$ and Pieter Abbeel$^3$% <-this % stops a space
    \thanks{$^*$ Equal contribution.}%
    \thanks{$^1$ Stanford University, Stanford, CA, USA.}% <-this % stops a space
    \thanks{$^2$ San Jose State University, CA, USA.}% <-this % stops a space
    \thanks{$^3$ University of California, Berkeley, CA, USA.}% <-this % stops a space
    \thanks{{\tt\small kylewray@stanford.edu}, %
            {\tt\small stas.tiomkin@sjsu.edu}, %
            {\tt\small mykel@stanford.edu}, and %
            {\tt\small pabbeel@cs.berkeley.edu}}% <-this % stops a space
    %\thanks{This work was supported by XXXXXXX and YYYYYYY.}%Any opinions, findings, and conclusions expressed in this material are those of the author(s) and do not necessarily reflect the views of the XYZ.}% <-this % stops a space
}

% FUTURE KYLE: If you upload an IEEE robotics paper and it yells at you with error 1010 (the PDF may be corrupt) it is
% because it was tyring to remove all extra stuff, including this. Just comment "pdfinfo" and it works.
%\pdfinfo{%
%    /Title {Multi-Objective Policy Gradients With Topological Constraints} %%% Title Location 3/3
%    /Author (Kyle Hollins Wray, Stas Tiomkin, Pieter Abbeel)}
% ----------------------------------------------------------------------------------
% ----------------------------------------------------------------------------------
% ----------------------------------------------------------------------------------

\begin{document}

% ----- IEEE Disable Dash For Repeating Names In References -----
\bstctlcite{IEEEdisabledash:BSTcontrol}
% ---------------------------------------------------------------

\maketitle
\thispagestyle{empty}
\pagestyle{empty}

% TODO: If we need extra lines, we can modify these to shrink equation padding.
%\setlength{\abovedisplayskip}{0pt}
%\setlength{\belowdisplayskip}{0pt}

\begin{abstract}
%    This paper presents a novel formalization of policy gradients for use in function approximations in the general topological Markov decision process (TMDP) model.
%    This formalization uses a special Lagrangian representation in the Bellman equation to encode the constraint satisfaction.
%    It overcomes the key challenge in using current multi-objective algorithms in TMDPs: large discrete or continuous state space problems are intractable due to the added constraints.
%    A new scalable policy gradient derivation is provided as well as a bound on the Lagrange multipliers, formally guaranteeing convergence while ensuring constraints are satisfied.
%    These theoretical results enable function approximation methods for TMDPs to be developed.
%    We propose a novel policy gradient approach for multiple objectives with function approximation which is compatible with standard deep reinforcement learning (DRL) algorithms.
%    To demonstrate this, we propose a new DRL algorithm called topological policy optimization (TPO), extending standard policy optimization methods to this more general multi-objective model.
%    %A formal proof of TPO's optimality with respect to its surrogate objective is also derived.
%    Experiments in simulation show the success of the new approach in a multi-objective navigation domain.
%    Finally, the algorithm demonstrated on a real robot acting in a household environment.
    Multi-objective optimization models that encode ordered sequential constraints provide a solution to model various challenging problems including encoding preferences, modeling a curriculum, and enforcing measures of safety.
    %Multiple objective optimization with an arbitrary acyclic order of objectives provides a general framework for solving various challenging problems including encoding preferences, modeling a curriculum, and enforcing measures of safety.
    A recently developed theory of topological Markov decision processes (TMDPs) captures this range of problems for the case of discrete states and actions.
    In this work, we extend TMDPs towards continuous spaces and unknown transition dynamics by formulating, proving, and implementing the policy gradient theorem for TMDPs.
    This theoretical result enables the creation of TMDP learning algorithms that use function approximators, and can generalize existing deep reinforcement learning (DRL) approaches.
    Specifically, we present a new algorithm for a policy gradient in TMDPs by a simple extension of the proximal policy optimization (PPO) algorithm.
    We demonstrate this on a real-world multiple-objective navigation problem with an arbitrary ordering of objectives both in simulation and on a real robot.
    %We conclude the paper by discussing the broader impact of the current work for multiple objective optimization in general and for robotics in particular.
\end{abstract}

\section{Introduction}

In recent years, the theoretical foundations of Markov decision processes (MDPs)~\cite{Bbook57} and reinforcement learning (RL)~\cite{SBbook18} algorithms have grown to practical robotic applications in domains ranging from autonomous helicopter flight~\cite{ANicml04} to autonomous vehicles~\cite{WWZijcai17}.
However, larger real-world domains often must consider multiple objectives such as in energy, comfort, and noise management in buildings~\cite{KKKJBVTaic12} and hybrid electric vehicles~\cite{WLPiv21}.
In reinforcement learning, approaches like curriculum learning~\cite{FHWZAcorl17} and incorporating safety considerations~\cite{WSicml20} often sequentially learn differing objectives to incrementally develop skills.
The topological MDP (TMDP)~\cite{Wthesis19,WZMaaai15} is a general model that captures this space of problems.
There are various methods for multi-objective learning strategies such as constrained policy optimization~\cite{AHTAicml17} for non-incremental constrained MDPs (CMDPs) for related problems like robot walking~\cite{HHZTTLarxiv18}.
However, there exists a gap in the theoretical foundation for solving these general multi-objective models with policy gradient-based algorithms, consequently limiting the principled development of deep RL solutions.

In a TMDP, multiple objectives are ordered in a directed acyclic graph (DAG)~\cite{Wthesis19}.
The ordering can capture preference, constraint, curriculum, or safety.
% TODO: Think about if a full picture with two graphs and two paths in the house would work here...
A slack term, or equivalently budget, is assigned to each edge in the DAG.
It represents the allowable deviation in value of an optimal policy in a parent objective to improve the value of a child objective.
This flexible structure generalizes the constrained MDP (CMDP)~\cite{Abook99}, which is a fan-structured DAG, and a lexicographic MDP (LMDP)~\cite{WZMaaai15}, which is a chain-structured DAG (Figure~\ref{fig:tmdp-examples}).
The agent can learn all the objectives simultaneously~\cite{Abook99}, or it can incrementally step down the DAG in the order implied by its edges and learn each parent objective before moving on to its children~\cite{WZMaaai15}.

%\KHW{In a CMDP, each constraint is a cost function that has to be less than a budget.
%This can be infeasible and is dependent on the policy of the objective being maximized.
%In a TMDP, constraints can be defined in terms of reduction from the optimal value up to a slack.
%This requires knowledge of the optimal value to define this constraint, implying an order to the optimization process.
%However, TMDPs allow for a local action restriction (LAR) reformulation with optimality guarantees.
%This form removes the dependence on the policy for the constraints.}

Consider a robot navigation domain where the robot can be tasked with navigating to a goal location, monitoring a particular room, and avoiding another room, all based on a customer's preferences.
Each customer may have a preference, such as for example prioritizing monitoring over navigation or navigation over avoiding a region.
In each case, they may also have a tolerance allowing one objective to be reduced in favor of improving another.
These preference structures and tolerances can be captured within a TMDP and provided to the agent as it learns to complete its objectives.

In practice, solving these kinds of multi-objective problems optimally is not feasible because all of the objectives depend on the current policy~\cite{GKSicml98,WZMaaai15,AHTAicml17}.
Chain-structured approaches that modify the value have been considered in the tabular case~\cite{GKSicml98}.

Local action restriction (LAR)~\cite{WZMaaai15,WZijcai15} is a scalable approximate method to solve TMDPs.
It moves the global constraint into a set of local constraints over the states.
This reformulation removes the dependence of the policy on the constraints' values.
Fan-structured approaches (Figure~\ref{fig:tmdp-examples} (b)) have considered policy gradients, such as in CPO~\cite{AHTAicml17}.
However, the method only works for CMDPs and cannot model a DAG's topologically ordered constraints as in TMDPs.
Also, it requires computing expensive second-order terms as a Hessian to approximate constraint satisfaction (see Section 6.1~\cite{AHTAicml17}).
% TODO: Figure out how to not invite comparisons to CPO...
%In general, these approaches either do not ensure constraints are satisfied or do not admit a direct policy gradient.
In general, these approaches are computationally expensive and/or do not admit a direct policy gradient.
%This paper takes steps towards addressing this gap.

%\begin{figure}[ht]
%    \awesomebox{3cm}{}
%    \label{fig:tpo-multi-objective-experiment}
%    \caption{Robot navigation experiment with multiple objectives.}
%\end{figure}

%{\noindent
\emph{The goal of this work is to provide a multiple objective generalization of the policy gradient theorem for any given DAG of constraints and slack values.}
%}
Our theorem derives a policy gradient that performs an accurate optimization of the objectives, each being subject to the constraints of their ancestors. %provides additional value stability to improve learning with function approximation.
%We formally derive a bound that ensure the constraints are satisfied when following this policy gradient.
%This enables a multi-objective policy gradient theorem to be stated as well.
We apply this general result in the specific case of extending proximal policy optimization (PPO)~\cite{SWDRKarxiv17} for use in TMDPs.
The resulting new algorithm is called topological policy optimization (TPO), integrating the new policy gradient into the PPO objective.

Our main contributions include:
(1) a formal definition and derivation a multi-objective policy gradient within a TMDP (Section~\ref{sec:policy-gradient-theorem-for-tmdp});
(2) the deep reinforcement learning algorithm called TPO using this new policy gradient (Section~\ref{sec:topological-policy-optimization});
(3) experiments demonstrating the success of TPO in both simulation and on a real robot (Section~\ref{sec:experiments}).

\newpage

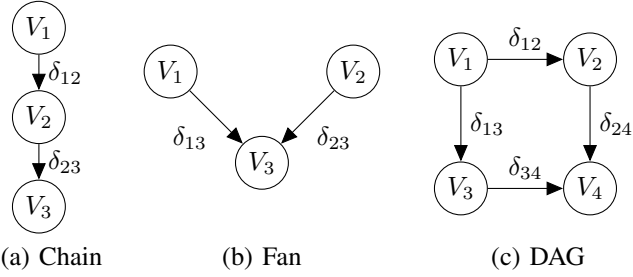
\begin{figure}[ht]
    \newcolumntype{S}{>{\centering\arraybackslash} m{1.5cm}}
    \newcolumntype{L}{>{\centering\arraybackslash} m{3.25cm}}
    \begin{tabular}{SLL}
        \begin{tikzpicture}
            \node[latent] (1) {$V_1$} ; %
            \node[latent, below=of 1, yshift=1.5em] (2) {$V_2$} ; %
            \node[latent, below=of 2, yshift=1.5em] (3) {$V_3$} ; %
            \verticaledge{1}{$\delta_{12}$}{2} ; %
            \verticaledge{2}{$\delta_{23}$}{3} ; %
        \end{tikzpicture} &
        \begin{tikzpicture}
            \node[latent] (3) {$V_3$} ; %
            \node[latent, above left=of 3] (1) {$V_1$} ; %
            \node[latent, above right=of 3] (2) {$V_2$} ; %
            \backwardslantedgebelow{1}{$\delta_{13}$}{3} ; %
            \forwardslantedgebelow{2}{$\delta_{23}$}{3} ; %
        \end{tikzpicture} &
        \begin{tikzpicture}
            \node[latent] (1) {$V_1$} ; %
            \node[latent, right=of 1] (2) {$V_2$} ; %
            \node[latent, below=of 1] (3) {$V_3$} ; %
            \node[latent, right=of 3] (4) {$V_4$} ; %
            \verticaledge{1}{$\delta_{13}$}{3} ; %
            \horizontaledge{3}{$\delta_{34}$}{4} ; %
            \verticaledge{2}{$\delta_{24}$}{4} ; %
            \horizontaledge{1}{$\delta_{12}$}{2} ; %
        \end{tikzpicture} \\
        (a) Chain & (b) Fan & (c) DAG
    \end{tabular}
    \caption{Example TMDP objective $V_i$ nodes and edges with slacks $\delta_{ij}$.}
    \label{fig:tmdp-examples}
    \vspace{-1em}
\end{figure}

\section{Background}
\label{sec:model}

\subsection{Markov Decision Process (MDP)}

A \textbf{Markov decision process (MDP)} is defined by the tuple $\langle S, A, T, R \rangle$.
$S$ is a set of states.
$A$ is a set of actions.
$T(s' \mid s, a)$ is a Markovian state transition that captures the probability of transitioning to a successor state $s'$ given that action $a$ was taken in state $s$. % and is denoted by $T(s' \mid s, a) = Pr(s' \mid s, a)$.
$R(s, a)$ is a reward function that describes the effect of performing an action $a$ in a state $s$. % and is denoted by $R(s, a)$.
For discounted MDPs, $\gamma \in [0, 1)$ is a discount factor on the reward over time.
It is also common to have an initial $s^0$.

Reinforcement learning can be described as the collection of algorithms that do not assume $T$ is provided~\cite{SBbook18}. Moreover, many of these algorithms do not assume $S$, $A$, or $R$ are fully provided a priori either. Instead, they are only observed upon visiting a state and performing an action.

A \textbf{policy} $\pi(a \mid s)$ maps each state $s$ to a probability of performing each action $a$.
Let $\tau = \langle s^0, a^0, r^0, s^1, a^1, r^1, \ldots \rangle$ denote a \textbf{trajectory} with each state $a^t \sim \pi(\cdot \mid s^t)$, $r^t = R(s^t, a^t)$, and $s^{t+1} \sim T(\cdot \mid s^t, a^t)$.
The goal in reinforcement learning is to explore and use experienced trajectories $\tau$ to find a policy $\pi^*$ that maximizes expected reward.

Thus, the agent seeks an optimal policy $\pi^*$ such that:
\begin{equation*}
    \pi^* = \underset{\pi}{\argmax} ~ \mathbb{E} \Big[ \sum_{t=0}^\infty \gamma^t R(s^t, a^t) \mid \pi, s^0 \Big].
\end{equation*}
Let the \textbf{value} $V^\pi : S \rightarrow \mathbb{R}$ of a policy $\pi$ be its expected reward.
Given the model, we can solve the MDP by iteratively applying the Bellman optimality equation at states $s$:
\begin{equation*}
    V^\pi(s) = \max_a Q^\pi(s, a)
\end{equation*}
\begin{equation*}
    Q^\pi(s, a) = R(s, a) + \gamma \sum_{s'} T(s' \mid s, a) V^\pi(s')
\end{equation*}
with $V^*$ and $Q^*$ denoting optimal values.
The \textbf{advantage} is defined as $A^\pi(s, a) = Q^\pi(s, a) - V^\pi(s)$~\cite{SWDRKarxiv17} and is useful in describing the advantage in value of one action over another.

\subsection{Policy Gradient}

Policy gradient methods improve the policy along a gradient and forms the foundation for function approximation~\cite{SMSMnips00}.
It assumes the policy $\pi$ is parameterized by parameters $\theta$.
Let $\rho(\pi) = V^\pi(s^0)$ be the value of the initial state $s^0$ following a policy $\pi$.
The policy gradient is:
\begin{equation*}
    \frac{\partial \rho^\pi}{\partial \theta} = \sum_s d^\pi(s) \sum_a \frac{\partial \pi(a \mid s)}{\partial \theta} Q^\pi(s, a)
\end{equation*}
with $d^\pi$ denoting the stationary state distribution of $\pi$.

\subsection{Topological Markov Decision Process}

A \textbf{topological Markov decision process (TMDP)}~\cite{Wthesis19} generalizes the MDP to multiple objectives and is defined by $\langle S, A, T, {\vec R}, E, {\vec \delta} \rangle$.
${\vec R} : S \times A \rightarrow \mathbb{R}^k$ emits $k$ rewards; each $i \in K = \{1, \ldots, k\}$ can be written as $R_i(s, a)$.
$E \subseteq K \times K$ forms a directed acyclic graph (DAG) over the rewards, with one leaf node, assumed to be $k$ without loss of generality.
${\vec \delta} = \{\delta_{wv} | (w, v) \in E\}$ denotes the set of slack variables---allowable deviation from optimal value---for each parent-child objective pair.
The objective in a TMDP is: for each objective $i \in K$, following the order of the DAG $E$, we solve:
\begin{align}
    \maximize{\pi}  & \quad V_i^\pi(s^0) \label{eq:top:lagrangian}\\
    \subjectto      & \quad V_w^*(s^0) - V_w^\pi(s^0) \leq \delta_{wv}, ~~ \forall v \in \mathcal{A}_i \cup \{i\}, \forall w \in \mathcal{P}_v\nonumber
\end{align}
with $\mathcal{P}_v, \mathcal{A}_i \subset K$ denoting the parents and ancestors of $v$ and $i$ in $E$ respectively; and $V_j^*(s^0)$ denoting the optimal value of $j$ following this same constrained objective.
In other words, the constraints state that every vertex $v$ must satisfy the slacks $\eta_{wv}$ from its parents $w$ following an edge $e=(w, v)$.
Also, let $E_i = \{e = \langle w, v \rangle \in E \mid v \in \mathcal{A}_i \cup \{i\} \text{ and } w \in \mathcal{P}_v\}$ denote all $i$'s ancestral edges in $E$.
% TODO TODO TODO TODO TODO TODO TODO TODO TODO TODO TODO TODO TODO TODO TODO
% Convert v in A_i and w in P_v into just <w, v> in E_i... everywhere...
% TODO TODO TODO TODO TODO TODO TODO TODO TODO TODO TODO TODO TODO TODO TODO
An optimal policy $\pi^*$ is the policy $\pi^*_k$ computed by the leaf node $k$ in $E$.
The leaf optimizes $V_k^*(s^0)$ ensuring that the union of all ancestral constraints are satisfied.

\textbf{Local action restriction (LAR)}~\cite{WZMaaai15,Wthesis19} is an approximation that restriction actions locally by a local slack ${\vec \eta}$, rather than a global slack ${\vec \delta}$.
It has been shown that if $\eta_i = (1 - \gamma) \delta_i$, then global slack can be preserved, if desired.
LAR enables scalable algorithms to be devised. The resulting TMDP objective for each objective $i$ becomes:
\begin{align}
    \maximize{\pi}  & \quad V_i^\pi(s^0) \label{eq:tmdp:objective-lar} \\
    \subjectto      & \quad V_w^*(s) - Q_w^*(s, \pi(s)) \leq \eta_{wv}, \nonumber \\
                    & \quad \quad \quad \quad \quad \forall v \in \mathcal{A}_i \cup \{i\}, \forall w \in \mathcal{P}_v, s \in S \nonumber
\end{align}
%Note that $V_w^*(s) - Q_w^*(s, \pi(s)) = -A_w^*(s, \pi(s))$. In other words, the constraints are the negative advantages.
TMDPs generalize both LMDPs and feasible CMDPs.
See Figure~\ref{fig:tmdp-examples} for examples. % that illustrate this generality.

\section{Policy Gradient Theorem for TMDP}
\label{sec:policy-gradient-theorem-for-tmdp}

% NOTE: ρ in the 1999 paper is value at initial state. It's a function of policy, not state.

Solving the LAR optimization in Equation~\ref{eq:tmdp:objective-lar} can be done using a modified Bellman optimality equation for tabular policy~\cite{WZMaaai15,WZijcai15,Wthesis19}.
This formulation does not admit function approximation, or specifically, the use of a policy gradient or deep reinforcement learning methods.
Our main result in this section is a formulation and proof of a policy gradient theorem to solve Equation~\ref{eq:tmdp:objective-lar}.

The goal is to prove a TMDP policy gradient theorem.
To accomplish this, we need to formulate a constrained Bellman optimality equation corresponding to Equation~\ref{eq:tmdp:objective-lar}.
LAR allows us to move the constraints into the Bellman optimality equation~\cite{Wthesis19}.
However, this becomes a constrained optimization problem.
We seek to reduce this to an unconstrained optimization problem in order to leverage the abundant prior work on (approximate) policy gradients~\cite{SLMJAicml15,SWDRKarxiv17}.
We convert the constrained Bellman equation into an unconstrained Bellman equation using Lagrange multipliers.
However, the standard approach results in the reward itself being penalized by an arbitrary amount based on the ancestors' advantages.
Consequently, this would change the optimization's value function. % loses semantic meaning (e.g. the units) of the reward and value function.

We provide a solution that will not affect the reward in this manner. This requires both: (1) the Lagrange multiplier to be assigned properly via a bound, and (2) the constraint term to be transformed such that it is zero when the constraint is not violated.
We derive a bound on the Lagrange multiplier to ensure it is large enough such that any action that violates the constraint will not be chosen.
We prove that the transformed constraint term preserves the original constraint satisfaction.
When these facts are combined, the result is that the original value is preserved and the constraints are satisfied.

With this new Bellman optimality equation, Lagrange multiplier bounds, and transformed constraint, we prove the constrained policy gradient theorem for the TMDP.

\subsection{Lagrangian Bellman Optimality Equation}

%We begin by writing the \textbf{Lagrangian optimization formulation} $\mathbf{\rho}_i(\pi)$ is the Lagrangian of the optimization in Equation~\ref{eq:tmdp:objective-lar}:
%\begin{equation}
%    \label{eq:tpo:lagrangian}
%    \mathbf{\rho}_i(\pi) = \rho_i(\pi) - \sum_{v,w,s} \beta_{wvs} ( A_w^*(s, \pi(s)) + \eta_{wv} )
%\end{equation}
%with Lagrange multipliers $\beta_{wvs} \geq 0$.
%Importantly, LAR enables $A_j^*$ to be computable from $V_j^*$ and $Q_j^*$.

We need to derive the corresponding Bellman optimality equation.
However directly writing a Bellman equation for the unconstrained optimization formulation in Equation~\ref{eq:top:lagrangian} can be onerous due to the extra terms.
Instead, as shown in prior work~\cite{WZMaaai15,Wthesis19}, we can leverage LAR to move the constraints in Equation~\ref{eq:tmdp:objective-lar} into the Bellman optimization problem over actions.
Ensuring the original constraints are satisfied at each state, implies they are satisfied for the original optimization.
This Bellman optimality equation with LAR at a state $s$ is the constrained optimization~\cite{Wthesis19}:
\begin{align}
    \maximize{a}    &\quad Q_i^\pi(s, a) = R_i(s, a) + \gamma \sum_{s'} T(s' \mid s, a) V_i^\pi(s') \nonumber \\
    \subjectto      &\quad {-}A_w^*(s, a) \leq \eta_{wv}, \forall w,v \label{eq:tmdp:bellman-lar}
\end{align}
with $V_i^\pi(s) = Q_i^\pi(s, a^*)$ for constraint-optimal action $a^*$.

The naive use of Lagrange multipliers would result in the undesirable Equation~\ref{eq:tmdp:bad-naive-lagrangian-equation} below that arbitrarily modifies the rewards, affecting the values as a consequence:
\begin{align}
    \textbf{V}_i^\pi(s) &= \max_{a} \Big( R_i(s, a) + \gamma \sum_{s'} T(s' \mid s, a) \textbf{V}_i^\pi(s') \nonumber \\
                        &\quad\quad\quad - \sum_{v,w} \beta_{wvs} ({-}A_w^*(s, a) - \eta_{wv}) \Big) \label{eq:tmdp:bad-naive-lagrangian-equation}
\end{align}
with Lagrange multipliers $\beta_{wvs} \geq 0$ and Lagrangian value function $\textbf{V}_i^\pi$.
To illustrate the issue, consider two cases.
If objective $i$ chooses an action beyond the budgeted slack $\eta_{wv}$, then there should be a penalty.
Otherwise, there should not be an increase in the reward for an action less than the budgeted slack.
Instead, the agent should simply maximize its original reward.
%To overcome this problem, we instead transform the constraint as follows.
Thus, crucially, the extra constraint terms should only be a penalty if the constraint is violated.
Otherwise, the reward should remain unaffected.
To accomplish this, we need to transform the constraint as follows.

\begin{proposition}
    \label{prop:lagrangian-bellman-solves-tmdp-lar}
    For an objective $i$ and state $s$, the optimization in Equation~\ref{eq:tmdp:bellman-lar} is equivalently solved by:
    \begin{align}
        \maximize{a}    &\quad Q_i^\pi(s, a) = R_i(s, a) + \gamma \sum_{s'} T(s' \mid s, a) V_i^\pi(s') \nonumber \\
        \subjectto      &\quad C_{wv}(s, a) \leq 0, \forall w,v \label{eq:tmdp:bellman-lar-transformed}
    \end{align}
    with $C_{wv}(s, a) = \max\{0, {-}A_w^*(s, a) - \eta_{wv}\}$.
\end{proposition}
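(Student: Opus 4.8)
The plan is to show that the two optimization problems in Equation~\ref{eq:tmdp:bellman-lar} and Equation~\ref{eq:tmdp:bellman-lar-transformed} have the same objective and the same feasible set, from which equivalence of their optimal actions and optimal values follows at once. Both problems maximize the identical quantity $Q_i^\pi(s, a)$ over the same action space at the fixed state $s$, so it suffices to prove that the two constraint systems carve out exactly the same feasible region.

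First I would fix an arbitrary parent-child pair $(w, v)$ together with a state-action pair $(s, a)$, and reduce the claim to an elementary property of the hinge operator $x \mapsto \max\{0, x\}$. Writing $x = {-}A_w^*(s, a) - \eta_{wv}$, the transformed constraint reads $C_{wv}(s, a) = \max\{0, x\} \leq 0$. Because $\max\{0, x\} \geq 0$ for every real $x$, with equality precisely when $x \leq 0$, the condition $\max\{0, x\} \leq 0$ holds if and only if $x \leq 0$, i.e. if and only if ${-}A_w^*(s, a) - \eta_{wv} \leq 0$. Rearranging gives ${-}A_w^*(s, a) \leq \eta_{wv}$, which is exactly the original LAR constraint of Equation~\ref{eq:tmdp:bellman-lar}.

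Then I would note that this equivalence holds pointwise for every pair $(w, v)$, so the conjunction $\{C_{wv}(s, a) \leq 0 ~\forall w, v\}$ is satisfied if and only if $\{{-}A_w^*(s, a) \leq \eta_{wv} ~\forall w, v\}$ is satisfied. Hence the feasible sets of the two problems coincide as subsets of the action space at state $s$. Combined with the identical objective, this means any action feasible and optimal for one problem is feasible and optimal for the other, and the two optimal values agree; this is the precise sense in which Equation~\ref{eq:tmdp:bellman-lar-transformed} equivalently solves Equation~\ref{eq:tmdp:bellman-lar}.

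The argument is essentially a rewriting, so there is no deep obstacle; the only point requiring care is to confirm that clamping by $\max\{0, \cdot\}$ neither enlarges nor shrinks the feasible region at the boundary ${-}A_w^*(s, a) = \eta_{wv}$, which is why I would make the equality-iff-$x \leq 0$ property explicit rather than treat it as self-evident. The reason this equivalent form is worth isolating as a proposition is downstream: $C_{wv}$ is nonnegative and vanishes exactly on the feasible region, so once it is moved into the Lagrangian it contributes a penalty only when the constraint is actually violated and leaves the reward untouched otherwise. I would flag this as the motivation for preferring the transformed constraint even though, at the level of feasible sets, the two formulations are identical.
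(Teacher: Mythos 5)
Your proof is correct and follows essentially the same route as the paper's: both reduce the claim to the elementary property of the hinge operator $\max\{0,\cdot\}$ applied to $x = {-}A_w^*(s,a) - \eta_{wv}$. If anything, yours is slightly tighter, since you make the biconditional explicit (the paper writes only the forward chain of implications, leaving the converse---which is needed for genuine equivalence of feasible sets---implicit), while the paper additionally remarks that feasibility always holds because some action attains ${-}A_w^*(s,a)=0$, a point you omit but do not need for set equality.
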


\begin{proof}
The TMDP Bellman optimality equation with LAR at a state $s$ follows Equation~\ref{eq:tmdp:bellman-lar}.
%\begin{align*}
%    \maximize{a}    &\quad Q_i^\pi(s, a) = R_i(s, a) + \gamma \sum_{s'} T(s' \mid s, a) \mathbf{V}_i^\pi(s') \\
%    \subjectto      &\quad {-}A_w^*(s, a) \leq \eta_{wv}, \forall w,v
%\end{align*}
%with $V_i^\pi(s) = Q_i^\pi(s, a^*)$ for optimal action $a^*$.
By definition, for the optimal policy's values, ${-}A_w^*(s, a) = V_w^*(s) - Q_w^*(s, a) \geq 0$ and there always exists an action $a^*$ such that ${-}A_w^*(s, a) = 0$.
Thus this optimization always feasible and we have:
\begin{align*}
    {-}A_w^*(s, a) \leq \eta_{wv} &\Rightarrow {-}A_w^*(s, a) - \eta_{wv} \leq 0 \\
                                  &\Rightarrow \max\{0, {-}A_w^*(s, a) - \eta_{wv}\} \leq 0 \\
                                  &\Rightarrow C_{wv}(s, a) \leq 0
\end{align*}
This lets us rewrite the optimization problem as in Equation~\ref{eq:tmdp:bellman-lar-transformed}, yielding our result.
\end{proof}

Now we can compute the Lagrangian of the constrained optimization problem in Equation~\ref{eq:tmdp:bellman-lar-transformed}.
The \textbf{Lagrangian Bellman optimality equation} is:
\begin{align}
    \textbf{V}_i^\pi(s) &= \max_{a} \Big( R_i(s, a) + \gamma \sum_{s'} T(s' \mid s, a) \textbf{V}_i^\pi(s') \nonumber \\
                        &\quad\quad\quad - \sum_{v,w} \beta_{wvs} C_{wv}(s, a) \Big) \label{eq:tpo:lagrangian-bellman}
\end{align}
For notational convenience, for any objective $j \in K$, the \textbf{Lagrangian Q-value} is $\mathbf{Q}_j(s, a)$ and the \textbf{Lagrangian advantage} is: $\mathbf{A}_j(s, \pi(s)) = \mathbf{Q}_j(s, \pi(s)) - \mathbf{V}_j(s)$.
In practice, $\mathbf{A}_j$ can be computed using a new form of generalized advantage, discussed in the next section.

Equation~\ref{eq:tpo:lagrangian-bellman} has two main properties.
First, it converted the original constrained optimization problem into an unconstrained Bellman optimality equation.
Second, if its constraints are satisfied, then it preserves the original reward; otherwise it penalizes the reward.

The scale of the constraint terms $C_{wv}(s, a)$ can be arbitrary based on the scale of the values.
%It can only appropriately balanced by the careful selection of Lagrange multipliers $\beta_{wvs}$.
If we do not select a sufficiently large multiplier, then it is possible that the penalty will not be enough to ensure constraint-violating actions are never chosen.

Thus, we derive a lower bound on $\beta_{wvs}$ such that the application of the Lagrangian Bellman optimality equation (Equation~\ref{eq:tpo:lagrangian-bellman}) will never select constraint-violating actions.
This is derived in Proposition~\ref{prop:lagrange-multiplier-bound}.

Finally, we must also prove that it preserves optimality of the original problem.
In summary, the use of Equation~\ref{eq:tpo:lagrangian-bellman} with $\beta_{wvs}$ as described below will converge to the correct values and simultaneously enforce the LAR constraints.
This is proven in Theorem~\ref{prop:lagrangian-bellman-solves-tmdp-lar-optimization}.

\begin{proposition}
    \label{prop:lagrange-multiplier-bound}
    Given objective $i$, state $s$, ancestor edges $\langle w, v \rangle \in E_i$, values $Q_i$, at least one constraint-violating action (infeasibility) exists, and optimal constraint-satisfying action $\hat{a}^*$,
    if all $\beta_{wvs}$ satisfy:
    \begin{equation}
        \label{eq:tmdp:lower-bound}
        \beta_{wvs} \geq \max_a \left\{ \begin{array}{l l}
                            \frac{Q_i(s, a) - Q_i(s, \hat{a}^*)}{\sum_{v,w} C_{wv}(s, a)} & \quad \text{if } \sum_{v,w} C_{wv}(s, a) > 0 \\
                            0 & \quad \text{otherwise}
                        \end{array} \right.
    \end{equation}
    then no constraint-violating actions $\hat{a}$ will be chosen, i.e., $\max_a \mathbf{Q}_i^\pi(s, a) = \mathbf{Q}_i^\pi(s, \hat{a}^*)$.
\end{proposition}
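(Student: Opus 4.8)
The plan is to compare the Lagrangian Q-value $\mathbf{Q}_i^\pi(s, a)$ of an arbitrary constraint-violating action against that of the optimal constraint-satisfying action $\hat{a}^*$, and to show the former can never exceed the latter once every $\beta_{wvs}$ meets the lower bound in Equation~\ref{eq:tmdp:lower-bound}. First I would evaluate the Lagrangian Q-value at $\hat{a}^*$. Because $\hat{a}^*$ is feasible, each constraint ${-}A_w^*(s, \hat{a}^*) \leq \eta_{wv}$ holds, and by the definition $C_{wv}(s, a) = \max\{0, {-}A_w^*(s, a) - \eta_{wv}\}$ from Proposition~\ref{prop:lagrangian-bellman-solves-tmdp-lar} this forces $C_{wv}(s, \hat{a}^*) = 0$ for every ancestor edge. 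Hence the entire penalty term vanishes and $\mathbf{Q}_i^\pi(s, \hat{a}^*) = Q_i(s, \hat{a}^*)$, which fixes the target value the maximization must attain.

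Next, for an arbitrary constraint-violating action $\hat{a}$ (so $\sum_{v,w} C_{wv}(s, \hat{a}) > 0$), I would expand $\mathbf{Q}_i^\pi(s, \hat{a}) = Q_i(s, \hat{a}) - \sum_{v,w} \beta_{wvs} C_{wv}(s, \hat{a})$ and aim to show it is at most $Q_i(s, \hat{a}^*)$. Writing $\beta_{\min}$ for the right-hand side of the bound in Equation~\ref{eq:tmdp:lower-bound}, I use $C_{wv} \geq 0$ and $\beta_{wvs} \geq \beta_{\min}$ to factor the penalty downward as $\sum_{v,w} \beta_{wvs} C_{wv}(s, \hat{a}) \geq \beta_{\min} \sum_{v,w} C_{wv}(s, \hat{a})$. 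Evaluating the definition of $\beta_{\min}$ at the particular action $a = \hat{a}$ inside its max, and using the positivity of $\sum_{v,w} C_{wv}(s, \hat{a})$ (guaranteed by infeasibility of $\hat{a}$) to clear the denominator, yields $\beta_{\min} \sum_{v,w} C_{wv}(s, \hat{a}) \geq Q_i(s, \hat{a}) - Q_i(s, \hat{a}^*)$. Rearranging gives $\mathbf{Q}_i^\pi(s, \hat{a}) \leq Q_i(s, \hat{a}^*) = \mathbf{Q}_i^\pi(s, \hat{a}^*)$, so no violating action beats $\hat{a}^*$.

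Finally, I would dispatch the remaining feasible actions to close the maximization. For any feasible $a$ the penalty again vanishes exactly as in the first step, so $\mathbf{Q}_i^\pi(s, a) = Q_i(s, a) \leq Q_i(s, \hat{a}^*)$ by the optimality of $\hat{a}^*$ among constraint-satisfying actions. Combining the feasible and infeasible cases shows that no action exceeds the Lagrangian Q-value of $\hat{a}^*$, hence $\max_a \mathbf{Q}_i^\pi(s, a) = \mathbf{Q}_i^\pi(s, \hat{a}^*)$ and the maximizer can always be taken to be constraint-satisfying.

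The main obstacle I anticipate is purely in the bookkeeping of the per-edge multipliers: the bound places one scalar lower bound $\beta_{\min}$ on every $\beta_{wvs}$, whereas the penalty mixes distinct edge terms $\beta_{wvs} C_{wv}$. The step that makes the argument work is pulling $\beta_{\min}$ out of the sum using $C_{wv} \geq 0$, and I would state explicitly that this is sound precisely because each $C_{wv}$ is nonnegative. I would also emphasize that the max over \emph{all} $a$ in the definition of $\beta_{\min}$ — rather than over $\hat{a}$ alone — is what guarantees the inequality holds simultaneously for every violating action, including whichever one would otherwise be the maximizer.
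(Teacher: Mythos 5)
Your proof is correct and takes essentially the same approach as the paper's: both arguments compare $\mathbf{Q}_i^\pi(s, \hat{a}^*)$ against an arbitrary violating action $\hat{a}$, use feasibility to zero the penalty at $\hat{a}^*$, and instantiate the bound at $a = \hat{a}$ to dominate the gap $Q_i(s,\hat{a}) - Q_i(s,\hat{a}^*)$ --- the paper simply runs the algebra in reverse, deriving the bound from the desired inequality rather than deriving the inequality from the bound. Two minor differences worth noting: your factoring of $\beta_{\min}$ out of the penalty via $C_{wv} \geq 0$ handles distinct per-edge multipliers, whereas the paper sidesteps this by assuming a single common constant $\beta_s$ for all $\beta_{wvs}$; and the paper states explicitly the premise that successor Lagrangian values satisfy $\mathbf{V}_i^\pi(s') = V_i^\pi(s')$, which your decomposition $\mathbf{Q}_i^\pi(s,a) = Q_i(s,a) - \sum_{v,w} \beta_{wvs} C_{wv}(s,a)$ uses implicitly.
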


\begin{proof}
    %Proof by indunction over iteration $t$ and state $s$.
    At any iteration of the Bellman equation, for any states $s'$, assume no other states have chosen suboptimal action.
    By Proposition~\ref{prop:lagrangian-bellman-solves-tmdp-lar} and Equation~\ref{eq:tpo:lagrangian-bellman}, the Lagrangian values are identical to the constrained optimization problems' values $\mathbf{V}_i^\pi(s') = V_i^\pi(s')$.

    Below, let $Q_i(s, a)$ be the result of applying the standard Bellman equation.
    Let $\hat{a}^*$ be the optimal constraint-satisfying action solving Equation~\ref{eq:tmdp:bellman-lar}.
    Note that both $Q_i(s, a)$ and $\hat{a}^*$ can be computed separately without depending on the equation being considered (Equation~\ref{eq:tpo:lagrangian-bellman}).

    Consider the application of the Lagrangian Bellman optimality equation on a state $s$ (Equation~\ref{eq:tpo:lagrangian-bellman}).
    Assume the existence of a constraint violating action $\hat{a}$. (By construction in Proposition~\ref{prop:lagrangian-bellman-solves-tmdp-lar}, any constraint-satisfying action will result in the penalty constraint terms being $0$.)
    For this maximization, it is sufficient to compare the actions $\hat{a}^*$ and $\hat{a}$:
    \begin{equation*}
        \mathbf{Q}_i^\pi(s, \hat{a}^*) \geq \mathbf{Q}_i^\pi(s, \hat{a})
    \end{equation*}
    \begin{align*}
        &R_i(s, \hat{a}^*) + \gamma \sum_{s'} T(s' \mid s, \hat{a}^*) \textbf{V}_i^\pi(s') - \sum_{v,w} \beta_{wvs} C_{wv}(s, \hat{a}^*) \\
        &\quad \geq R_i(s, \hat{a}) + \gamma \sum_{s'} T(s' \mid s, \hat{a}) \textbf{V}_i^\pi(s') - \sum_{v,w} \beta_{wvs} C_{wv}(s, \hat{a})
    \end{align*}
    Since $\mathbf{V}_i^\pi(s') = V_i^\pi(s')$ and $C_{wv}(s, \hat{a}^*) = 0$, we have:
    \begin{align*}
        &R_i(s, \hat{a}^*) + \gamma \sum_{s'} T(s' \mid s, \hat{a}^*) V_i^\pi(s') - \sum_{v,w} \beta_{wvs} 0 \\
        &\quad \geq R_i(s, \hat{a}) + \gamma \sum_{s'} T(s' \mid s, \hat{a}) V_i^\pi(s') - \sum_{v,w} \beta_{wvs} C_{wv}(s, \hat{a})
    \end{align*}
    \begin{equation*}
        Q_i(s, \hat{a}^*) - Q_i(s, \hat{a}) + \sum_{v,w} \beta_{wvs} C_{wv}(s, \hat{a}) \geq 0
    \end{equation*}
    Let $\beta_s$ be the same constant used for all $\beta_{wvs}$, yielding:
    \begin{equation*}
        Q_i(s, \hat{a}^*) - Q_i(s, \hat{a}) + \beta_{s} \sum_{v,w} C_{wv}(s, \hat{a}) \geq 0
    \end{equation*}
    \begin{equation*}
        \beta_{s} \geq \frac{Q_i(s, \hat{a}) - Q_i(s, \hat{a}^*)}{\sum_{v,w} C_{wv}(s, \hat{a})}
    \end{equation*}
    This inequality must be satisfied for all $\hat{a}$, enforcable with a maximization over actions.
    This results in Equation~\ref{eq:tmdp:lower-bound}.
\end{proof}

\begin{theorem}
    \label{prop:lagrangian-bellman-solves-tmdp-lar-optimization}
    Lagrangian Bellman optimality Equation~\ref{eq:tpo:lagrangian-bellman} solves the optimization in Equation~\ref{eq:tmdp:objective-lar}.
\end{theorem}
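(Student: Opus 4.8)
The plan is to prove the theorem by induction over the Bellman backup iterations, establishing that the Lagrangian value function generated by Equation~\ref{eq:tpo:lagrangian-bellman} agrees state-by-state with the value function of the constrained problem in Equation~\ref{eq:tmdp:bellman-lar}, and then invoking the prior LAR result that the latter solves Equation~\ref{eq:tmdp:objective-lar}. The two preceding results supply precisely the machinery required: Proposition~\ref{prop:lagrangian-bellman-solves-tmdp-lar} ensures the transformed penalty $C_{wv}(s,a)$ vanishes on every constraint-satisfying action, while Proposition~\ref{prop:lagrange-multiplier-bound} ensures that, whenever $\beta_{wvs}$ meets the lower bound of Equation~\ref{eq:tmdp:lower-bound}, the maximizer of the penalized backup is the constraint-optimal action $\hat{a}^*$. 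In effect, the theorem discharges the inductive hypothesis that the proof of Proposition~\ref{prop:lagrange-multiplier-bound} assumes.

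First I would run value iteration on both equations from a shared initialization and take as the inductive hypothesis that $\mathbf{V}_i^{(t)}(s') = V_i^{(t)}(s')$ at all successor states $s'$. For the step at a state $s$, Proposition~\ref{prop:lagrange-multiplier-bound} forces the backup in Equation~\ref{eq:tpo:lagrangian-bellman} to select $\hat{a}^*$, and Proposition~\ref{prop:lagrangian-bellman-solves-tmdp-lar} gives $C_{wv}(s,\hat{a}^*)=0$ on every ancestral edge, so the penalty sum drops. Substituting the hypothesis into the bootstrap term yields $\mathbf{V}_i^{(t+1)}(s) = R_i(s,\hat{a}^*) + \gamma \sum_{s'} T(s' \mid s,\hat{a}^*)\, V_i^{(t+1)}(s') = V_i^{(t+1)}(s)$, closing the induction. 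Because $\hat{a}^*$ and the $Q_i$ used to form the bound are computed independently of Equation~\ref{eq:tpo:lagrangian-bellman}, as noted in the proof of Proposition~\ref{prop:lagrange-multiplier-bound}, the argument carries no circular dependence.

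Next I would establish convergence. The ancestor values $V_w^*$ entering $C_{wv}$ are fixed from earlier stages of the DAG, so for objective $i$ the feasible action set is constant in $V_i$; Proposition~\ref{prop:lagrangian-bellman-solves-tmdp-lar} guarantees it is nonempty via the existence of an action with $-A_w^*(s,a)=0$. Hence the constrained backup of Equation~\ref{eq:tmdp:bellman-lar} is a $\gamma$-contraction with a unique fixed point $V_i^\pi$, and by the pointwise equality above $\mathbf{V}_i^\pi$ converges to the same limit. Finally, LAR lifts the per-state conditions $-A_w^*(s,a)\le\eta_{wv}$ to the global slacks of Equation~\ref{eq:tmdp:objective-lar}: with $\eta_{wv}=(1-\gamma)\delta_{wv}$, satisfaction at every state implies $V_w^*(s^0)-V_w^\pi(s^0)\le\delta_{wv}$, so the common fixed point is feasible and optimal for Equation~\ref{eq:tmdp:objective-lar}.

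The step I expect to be the main obstacle is making the coupling between the multiplier bound and the value estimates fully rigorous. The bound in Equation~\ref{eq:tmdp:lower-bound} is expressed through $Q_i(s,a)$ and $\hat{a}^*$, which are tied to the iterates being updated, so I must argue that a finite admissible $\beta_{wvs}$ exists at each iteration and that committing to $\hat{a}^*$ never perturbs the successor values invoked by the inductive hypothesis. I expect this to collapse to the observation that on $\hat{a}^*$ the Lagrangian and constrained backups agree term-for-term, so no property of $\beta_{wvs}$ beyond Proposition~\ref{prop:lagrange-multiplier-bound} is needed to complete the proof.
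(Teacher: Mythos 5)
Your proposal is correct and takes essentially the same route as the paper's proof: both chain Proposition~\ref{prop:lagrangian-bellman-solves-tmdp-lar} (the transformed penalty $C_{wv}$ vanishes on constraint-satisfying actions) with Proposition~\ref{prop:lagrange-multiplier-bound} (an admissible $\beta_{wvs}$ forces the backup to select $\hat{a}^*$) to conclude that the Lagrangian backup reproduces the constrained backup's values while satisfying the per-state constraints, hence solving Equation~\ref{eq:tmdp:objective-lar}. Your version is in fact more careful than the paper's three-step argument, since the explicit value-iteration induction and contraction step discharge the assumption made inside the proof of Proposition~\ref{prop:lagrange-multiplier-bound} (that $\mathbf{V}_i^\pi(s') = V_i^\pi(s')$ at successor states); the only slip is your final lifting step with $\eta_{wv} = (1-\gamma)\delta_{wv}$, which targets the global-slack constraints of Equation~\ref{eq:top:lagrangian} rather than the theorem's actual target, Equation~\ref{eq:tmdp:objective-lar}, whose constraints are already per-state --- that step is therefore unnecessary (and harmless), because per-state constraint satisfaction at every state together with value optimality is exactly what Equation~\ref{eq:tmdp:objective-lar} requires.
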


\begin{proof}
    By Proposition~\ref{prop:lagrange-multiplier-bound}, there exists a $\beta_{wvs}$ such that the use of the Lagrangian Bellman optimality equation (Equation~\ref{eq:tpo:lagrangian-bellman}) ensures the constraints:
    %\begin{equation*}
        $C_{wv}(s, a) \leq 0, \forall w, v$,
    %\end{equation*}
    are always satsified.

    Since the constraints are satisfied, by construction Equation~\ref{eq:tpo:lagrangian-bellman} has constraint terms $0$, implying $\mathbf{Q}_i^*(s, \hat{a}^*) = Q_i^*(s, \hat{a}^*) - 0 = Q_i^*(s, \hat{a}^*)$.
    Thus it solves the optimization in Equation~\ref{eq:tmdp:bellman-lar-transformed}.

    By Proposition~\ref{prop:lagrangian-bellman-solves-tmdp-lar}, it also solves the optimization problem in Equation~\ref{eq:tmdp:bellman-lar}.
    By construction of Equation~\ref{eq:tmdp:bellman-lar}'s constraints, for all states $s$, $-A_w^*(s, a) \leq \eta_{wv}$, and again $\mathbf{V}_i^*(s, a) = V_i^*(s, a)$.
    Thus, all constraints of the original optimization in Equation~\ref{eq:tmdp:objective-lar} are satified and return the same resulting policy at all states, solving this original optimization.
\end{proof}

\subsection{Multi-Objective Lagrangian Policy Gradient Theorem}

We have changed the objective into a form involving additional ancestor objectives.
Consequently, it is not a given that parameterized policies learning from samples---such as in the case of deep reinforcement learning---is able to converge to the optimal policy.
Here we present a novel policy gradient theorem~\cite{SMSMnips00} for any algorithm using this Lagrangian TMDP objective.
This \emph{multi-objective Lagrangian policy gradient theorem} is presented in Proposition~\ref{prop:multi-objective-lagrangian-policy-gradient-theorem} below.

\begin{proposition}
    \label{prop:multi-objective-lagrangian-policy-gradient-theorem}
    For any TMDP, with an discounted infinite horizon objective (or an average-reward objective), for each objective $i$:
    \begin{equation}
        \label{eq:multi-objective-lagrangian-policy-gradient-theorem}
        \frac{\partial \mathbf{\rho}_i^\pi}{\partial \theta} = \sum_s d^\pi(s) \sum_a \frac{\partial \pi(a \mid s)}{\partial \theta} \mathbf{Q}_i^\pi(s, a).
    \end{equation}
\end{proposition}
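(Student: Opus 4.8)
The plan is to recognize that, once the Lagrange multipliers are fixed, the Lagrangian TMDP objective for a single objective $i$ is nothing more than an ordinary single-objective MDP with a penalized reward, and then to invoke the classical policy gradient theorem of Sutton et al.\ verbatim. First I would define the penalized reward $\tilde{R}_i(s,a) = R_i(s,a) - \sum_{v,w}\beta_{wvs} C_{wv}(s,a)$, so that for the fixed parameterized policy $\pi$ the Lagrangian Bellman optimality equation (Equation~\ref{eq:tpo:lagrangian-bellman}) becomes the standard pair
\[
\mathbf{Q}_i^\pi(s,a) = \tilde{R}_i(s,a) + \gamma \sum_{s'} T(s'\mid s,a)\,\mathbf{V}_i^\pi(s'), \qquad \mathbf{V}_i^\pi(s) = \sum_a \pi(a\mid s)\,\mathbf{Q}_i^\pi(s,a),
\]
where the value under $\pi$ uses the expectation over actions rather than the $\max$. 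Under this identification, $\rho_i^\pi = \mathbf{V}_i^\pi(s^0)$ is exactly the value of $\pi$ in the MDP $\langle S, A, T, \tilde{R}_i\rangle$, and the right-hand side of Equation~\ref{eq:multi-objective-lagrangian-policy-gradient-theorem} is exactly the standard policy gradient for that MDP.

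The hard part will be verifying that $\tilde{R}_i$ carries no dependence on the policy parameters $\theta$, since the entire reduction rests on this. The penalty $C_{wv}(s,a) = \max\{0, -A_w^*(s,a) - \eta_{wv}\}$ is assembled only from the optimal ancestor advantage $A_w^*$ and the constant slack $\eta_{wv}$; because the DAG ordering guarantees each ancestor $w$ is solved before objective $i$, the quantities $V_w^*, Q_w^*, A_w^*$ are fixed functions of $(s,a)$ that do not vary with the current parameters. Treating each multiplier $\beta_{wvs}$ as a constant fixed at a value satisfying the bound of Proposition~\ref{prop:lagrange-multiplier-bound}---as is standard for the primal step of a Lagrangian scheme, where the dual variables are held fixed while the policy is updated---gives $\partial \tilde{R}_i / \partial\theta = 0$. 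The only remaining delicacy is the kink of the $\max$ in $C_{wv}$; this occurs on a measure-zero set of $(s,a)$ and can be handled with a subgradient, while everywhere else $C_{wv}$ is locally constant in $\theta$, so it contributes nothing to the gradient.

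With $\tilde{R}_i$ shown to be $\theta$-free, I would then run the standard recursive argument without modification. Differentiating $\mathbf{V}_i^\pi(s) = \sum_a \pi(a\mid s)\,\mathbf{Q}_i^\pi(s,a)$ and using $\partial \mathbf{Q}_i^\pi(s,a)/\partial\theta = \gamma\sum_{s'} T(s'\mid s,a)\,\partial\mathbf{V}_i^\pi(s')/\partial\theta$ (the $\tilde{R}_i$ term dropping out), repeated substitution unrolls the recursion into a sum over all states of $\sum_a (\partial\pi(a\mid s)/\partial\theta)\,\mathbf{Q}_i^\pi(s,a)$, weighted by the $t$-step discounted visitation probabilities that aggregate into the discounted state distribution $d^\pi$. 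This reproduces Equation~\ref{eq:multi-objective-lagrangian-policy-gradient-theorem}. The average-reward variant follows from the same unrolling with $d^\pi$ taken as the stationary distribution and the discount replaced by one, matching the two cases stated in the proposition.
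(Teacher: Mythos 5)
Your proposal is correct and follows essentially the same route as the paper's own proof: define the penalized reward $\mathbf{R}_i(s,a) = R_i(s,a) - \sum_{v,w}\beta_{wvs}C_{wv}(s,a)$, observe that $C_{wv}$ depends only on the ancestors' optimal advantages and fixed slacks (hence $\partial \mathbf{R}_i/\partial\theta = 0$, with the multipliers held fixed), and then invoke the classical policy gradient theorem of Sutton et al.\ on the resulting single-objective MDP. Your added remarks---explicitly unrolling the recursion and noting that the kink of the $\max$ is harmless (indeed, since $\theta$ never enters the argument of the $\max$, $C_{wv}$ is globally constant in $\theta$ and no subgradient is even needed)---only make explicit what the paper leaves to the cited reference.
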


\begin{proof}
    We begin by rewriting Equation~\ref{eq:tpo:lagrangian-bellman}:
    \begin{equation*}
        \textbf{V}_i^\pi(s) = \max_{a} \Big( \mathbf{R}_i(s, a)  + \gamma \sum_{s'} T(s' \mid s, a) \textbf{V}_i^\pi(s') \Big)
    \end{equation*}
    with $\mathbf{R}_i(s, a) = R_i(s, a) - \sum_{v,w} \beta_{wvs} C_{wv}(s, a)$.
    Consider the partial derivative of $\mathbf{R}_i$ with respect to policy parameters $\theta$.
    First, $\frac{\partial}{\partial \theta} R_i(s, a) = 0$.
    Second, by definition in Equation~\ref{eq:tmdp:bellman-lar-transformed}, each $C_{wv}$ does not depend on the current policy parameters $\theta$, as it is proportional to the optimal advantages and slacks of ancestors: $C_{wv}(s, a) = \max\{0, {-}A_w^*(s, a) - \eta_{wv}\}$.
    We have $\frac{\partial}{\partial \theta} C_{wv}(s, a) = 0$.
    Thus, $\frac{\partial}{\partial \theta} \mathbf{R}_i(s, a) = 0$.

    The rest of this proof follows directly from the original policy gradient theorem's proof~\cite{SMSMnips00}, using $\mathbf{R}_i$ as the reward and the fact that $\frac{\partial}{\partial \theta} \mathbf{R}_i(s, a) = 0$.
\end{proof}

\begin{algorithm}[t]
    \caption{Topological policy optimization}
    \label{alg:modm:algorithm}
    \begin{algorithmic}[1]
        \Require{$k$: Number of objectives}
        \Require{$E$: DAG of objective relationships}
        \Require{$\delta$: Slacks for each objectives}
        \Require{$\theta^0$: Initial policy parameters}
        \State $\pi_{\theta}, \hat{\mathbf{V}}$ $\gets$ $\pi_{\theta^0}$, $\emptyset$ % \{\}$
        \For{$i \gets$ \Call{TopologicalSort}{$E$}}
            \State \textit{// $\hat{\mathbf{A}}_i$ is a function of trajectories computed during}
            \State \textit{// PPO's rollouts; it uses the learned ancestral}
            \State \textit{// critics' set $\hat{\mathbf{V}}$ as a parameter}
            \State $\hat{\mathbf{A}}_i$ $\gets$ Eq.~\ref{eq:tpo:generalized-lagrangian-advantage-estimation} using this iteration's $\hat{\mathbf{V}}$ in $C_{wv}$
            \State $\pi_{\theta}, \hat{V}_i$ $\gets$ \Call{PPO}{$\pi_{\theta}$, $\hat{\mathbf{A}}_i$} \Comment{Use $\hat{\mathbf{A}}_i$ for PPO's adv.}
            \State $\hat{\mathbf{V}}$ $\gets$ $\hat{\mathbf{V}} \cup \{\hat{V}_i\}$ \Comment{Extend critic set for descendants}
        \EndFor
        \State \Return $\pi_{\theta}$, $\hat{\mathbf{V}}$ %\Comment{Approximate solution to Eq.~\ref{eq:tmdp:objective-lar}}
    \end{algorithmic}
\end{algorithm}

% NOTE: This has to be here because LaTeX is stupid.

\begin{figure*}[t]
    \centering
    \setlength{\tabcolsep}{0pt}         % Horizontal Spacing
    \renewcommand{\arraystretch}{1.1}   % Vertical Spacing
    %\newcolumntype{S}{>{\centering\arraybackslash} m{2.5cm}}
    %\newcolumntype{L}{>{\centering\arraybackslash} m{6.0cm}}
    \begin{tabular}{cccc}
        \scalebox{0.9}[0.9]{%
            \begin{tikzpicture}
                \node[latent] (1) {$M$} ; %
                \node[latent, right=of 1, xshift=-1em] (2) {$A$} ; %
                \node[latent, right=of 2, xshift=-1em] (3) {$G$} ; %
                \horizontaledge{1}{$\eta_1$}{2} ; %
                \horizontaledge{2}{$\eta_2$}{3} ; %
                \node[above=of 2, yshift=0em] (label) {\large (a)} ; %
                \node[below=of 2, yshift=-0em] (blank) {} ; %
            \end{tikzpicture}
        } &
        \begin{tikzpicture}
            \begin{axis}[
                    xmin=0, xmax=100,
                    ymin=-800, ymax=0,
                    xtick distance = 50,
                    ytick distance = 200,
                    grid = both,
                    major grid style = {lightgray},
                    minor grid style = {lightgray!25},
                    width = 5cm,
                    height = 4cm,
                    xlabel = {$\eta_1 = \eta_2$}, % For the top plot, disable the label.
                    ylabel = {$\mathbf{V}$},
                    %ylabel = {$V_\text{avoid}$, $V_\text{monitor}$, \& $V_\text{goal}$},
                    %xticklabels = {,,}, % For the top plot, disable tick labels.
                ]
                \draw[ultra thick,red,-] (0, -133.65) circle (1pt) -- (100, -311.16) circle (1pt); %
                \draw[ultra thick,blue,-] (0, -114.47) circle (1pt) -- (100, -310.06) circle (1pt); %
                \draw[ultra thick,green,-] (0, -773.65) circle (1pt) -- (100, -596.74) circle (1pt); %
            \end{axis}
        \end{tikzpicture} &
        \scalebox{0.9}[0.9]{%
            \begin{tikzpicture}
                \node[latent] (1) {$A$} ; %
                \node[latent, right=of 1, xshift=-1em] (2) {$M$} ; %
                \node[latent, right=of 2, xshift=-1em] (3) {$G$} ; %
                \horizontaledge{1}{$\eta_1$}{2} ; %
                \horizontaledge{2}{$\eta_2$}{3} ; %
                \node[above=of 2, yshift=0em] (label) {\large (b)} ; %
                \node[below=of 2, yshift=-0em] (blank) {} ; %
            \end{tikzpicture}
        } &
        \begin{tikzpicture}
            \begin{axis}[
                    xmin=0, xmax=100,
                    ymin=-800, ymax=0,
                    xtick distance = 50,
                    ytick distance = 200,
                    grid = both,
                    major grid style = {lightgray},
                    minor grid style = {lightgray!25},
                    width = 5cm,
                    height = 4cm,
                    xlabel = {$\eta_2$}, % For the top plot, disable the label.
                    ylabel = {$\mathbf{V}$},
                    %ylabel = {$V_\text{avoid}$, $V_\text{monitor}$, \& $V_\text{goal}$},
                    %xticklabels = {,,}, % For the top plot, disable tick labels.
                ]
                \draw[ultra thick,red,-] (0, -72.94) circle (1pt) -- (100, -186.35) circle (1pt); %
                \draw[ultra thick,blue,-] (0, -65.27) circle (1pt) -- (100, -111.21) circle (1pt); %
                \draw[ultra thick,green,-] (0, -738.6) circle (1pt) -- (100, -600.74) circle (1pt); %
            \end{axis}
        \end{tikzpicture} \\
        \scalebox{0.9}[0.9]{%
            \begin{tikzpicture}
                \node[latent] (1) {$G$} ; %
                \node[latent, right=of 1, xshift=-1em] (2) {$M$} ; %
                \node[latent, right=of 2, xshift=-1em] (3) {$A$} ; %
                \horizontaledge{1}{$\eta_1$}{2} ; %
                \horizontaledge{2}{$\eta_2$}{3} ; %
                \node[above=of 2, yshift=0em] (label) {\large (c)} ; %
                \node[below=of 2, yshift=-0em] (blank) {} ; %
            \end{tikzpicture}
        } &
        \begin{tikzpicture}
            \begin{axis}[
                    xmin=0, xmax=100,
                    ymin=-1000, ymax=-200,
                    xtick distance = 50,
                    ytick distance = 200,
                    grid = both,
                    major grid style = {lightgray},
                    minor grid style = {lightgray!25},
                    width = 5cm,
                    height = 4cm,
                    xlabel = {$\eta_1$}, % For the top plot, disable the label.
                    ylabel = {$\mathbf{V}$},
                    %ylabel = {$V_\text{avoid}$, $V_\text{monitor}$, \& $V_\text{goal}$},
                    %xticklabels = {,,}, % For the top plot, disable tick labels.
                ]
                \draw[ultra thick,red,-] (0, -364.38) circle (1pt) -- (100, -266.71) circle (1pt); %
                \draw[ultra thick,blue,-] (0, -288.76) circle (1pt) -- (100, -237.21) circle (1pt); %
                \draw[ultra thick,green,-] (0, -957.6) circle (1pt) -- (100, -957.46) circle (1pt); %
            \end{axis}
        \end{tikzpicture} &
        \scalebox{0.9}[0.9]{%
            \begin{tikzpicture}
                \node[latent] (3) {$G$} ; %
                \node[latent, above left=of 3, xshift=0.5em, yshift=-1.5em] (1) {$A$} ; %
                \node[latent, above right=of 3, xshift=-0.5em, yshift=-1.5em] (2) {$M$} ; %
                \backwardslantedgebelow{1}{$\eta_{1}$}{3} ; %
                \forwardslantedgebelow{2}{$\eta_{2}$}{3} ; %
                \node[above=of 3, yshift=0em] (label) {\large (d)} ; %
                \node[below=of 2, yshift=-2em] (blank) {} ; %
            \end{tikzpicture}
        } &
        \begin{tikzpicture}
            \begin{axis}[
                    xmin=0, xmax=100,
                    ymin=-800, ymax=0,
                    xtick distance = 50,
                    ytick distance = 200,
                    grid = both,
                    major grid style = {lightgray},
                    minor grid style = {lightgray!25},
                    width = 5cm,
                    height = 4cm,
                    xlabel = {$\eta_1 = \eta_2$},
                    ylabel = {$\mathbf{V}$},
                ]
                \draw[ultra thick,red,-] (0, -251.67) circle (1pt) -- (100, -519.57) circle (1pt); %
                \draw[ultra thick,blue,-] (0, -185.26) circle (1pt) -- (100, -509.55) circle (1pt); %
                \draw[ultra thick,green,-] (0, -769.72) circle (1pt) -- (100, -670.23) circle (1pt); %
            \end{axis}
        \end{tikzpicture}
    \end{tabular}
    \caption{Results for four different DAGs and slack assignments on the robot navigation domain. The x-axis denotes slack $\eta$. The y-axis denotes values for all three objectives: $V_\text{avoid}$ (red), $V_\text{monitor}$ (blue), and $V_\text{goal}$ (green). Note that the units of the values for the three objectives are distinct from one another.}
    \label{fig:experiments:dag-and-slack}
    \vspace{-10pt}
\end{figure*}
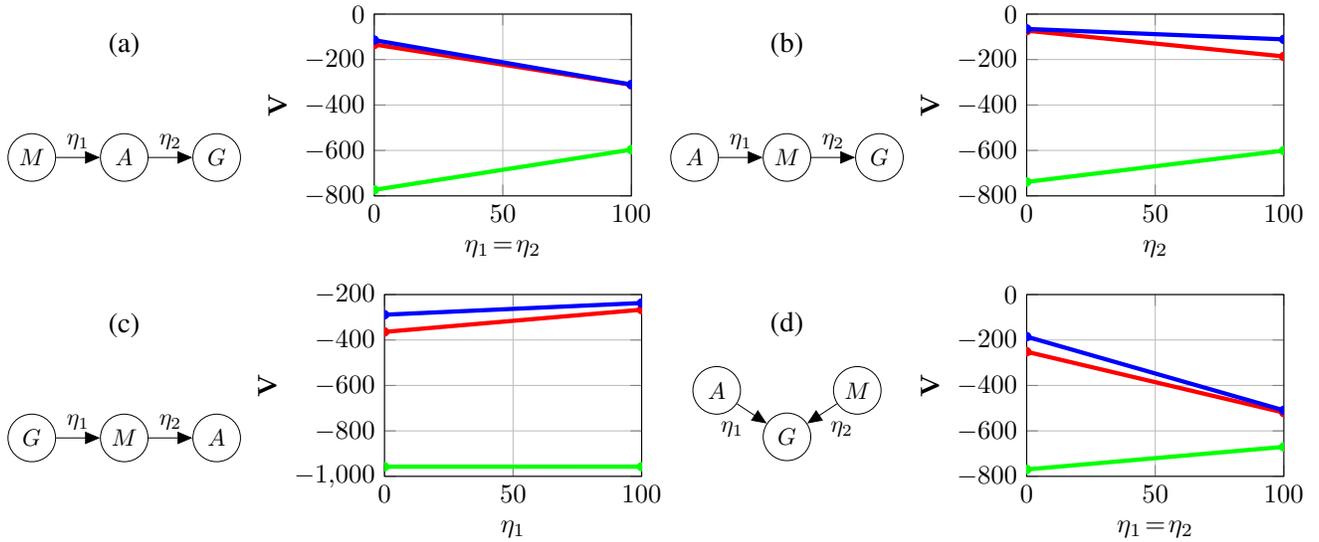

\section{Topological Policy Optimization}
\label{sec:topological-policy-optimization}

Proximal policy optimization (PPO)~\cite{SWDRKarxiv17} uses a (clipped) surrogate objective, computing an approximate policy gradient over batch trajectories $\tau$ of length $h$, and using a generalized advantage estimation.
We apply our Lagrangian Bellman equation results from the previous section to PPO.
%PPO builds off of a surrogate objective from trust region policy optimization (TRPO)~\cite{SLMJAicml15}
The following equations refer to this novel contribution.
At time step $t$, the objective $i$'s \textbf{Lagrangian surrogate objective} is:
\begin{equation}
    \label{eq:tpo:lagrangian-surrogate-objective}
    \mathbf{L}_i(\theta) = \hat{\mathbb{E}}^t \Big[ \log \pi_{\theta}(a^t | s^t) \hat{\mathbf{A}}_i^t \Big] %= \hat{\mathbb{E}}^t \Big[ \text{ratio}^t(\theta) \hat{\mathbf{A}}_i^t \Big]
    %\mathbf{L}_i(\theta) = \hat{\mathbb{E}}^t \Big[ \frac{\pi_{\theta}(a^t | s^t)}{\pi_{\theta_\text{old}}(a^t | s^t)} \hat{\mathbf{A}}_i^t \Big] %= \hat{\mathbb{E}}^t \Big[ \text{ratio}^t(\theta) \hat{\mathbf{A}}_i^t \Big]
\end{equation}
with $\hat{\mathbf{A}}_i^t$ denoting advantages computed by \textbf{generalized Lagrangian advantage estimation} (Equation~\ref{eq:tpo:generalized-lagrangian-advantage-estimation} below).

The generalized Lagrangian advantage estimation defines $\hat{\mathbf{A}}_i^t = \mathbf{\delta}_i^t +(\gamma \lambda) \mathbf{\delta}_i^{t+1} + \cdots + (\gamma \lambda)^{h-t+1} \mathbf{\delta}_i^{h-1}$ with single advantage estimates $\mathbf{\delta}_i^k = \mathbf{r}_i^k + \gamma \mathbf{V}_i^\pi(s^{k+1}) - \mathbf{V}_i^\pi(s^k)$.
The sample-based estimate for this expression is computed by:
\begin{align}
    \hat{\mathbf{A}}_i^t &= \sum_{k=t}^{h-1} (\gamma \lambda)^k \mathbf{\delta}_i^k = \sum_{k=t}^{h-1} (\gamma \lambda)^k \Big( \mathbf{r}_i^k + \gamma \mathbf{V}_i^\pi(s^{k+1}) - \mathbf{V}_i^\pi(s^k) \Big) \nonumber \\
                         &= \sum_{k=t}^{h-1} (\gamma \lambda)^k \Big( r_i^k - \sum_{v,w} \beta_{wvs^k} C_{wv}(s^k, a^k) \nonumber \\
                         &\quad \quad \quad \quad + \gamma (V_i^\pi(s^{k+1}) - \sum_{v,w} \beta_{wvs^{k+1}} C_{wv}(s^{k+1}, a^{k+1})) \nonumber \\
                         &\quad \quad \quad \quad - (V_i^\pi(s^k) - \sum_{v,w} \beta_{wvs^k} C_{wv}(s^k, a^k)) \Big) \nonumber \\
                         &= \sum_{k=t}^{h-1} (\gamma \lambda)^k \Big( r_i^k + \gamma V_i^\pi(s^{k+1}) - V_i^\pi(s^k) \nonumber \\
                         &\quad \quad \quad \quad - \gamma \sum_{v,w} \beta_{wvs^{k+1}} C_{wv}(s^{k+1}, a^{k+1}) \Big) \label{eq:tpo:generalized-lagrangian-advantage-estimation}
\end{align}
with $C_{wv}$ incorporating the ancestral advantages $A_{wv}$ as defined by Equation~\ref{eq:tmdp:bellman-lar-transformed}.

Additional components of PPO and similar algorithms may be used, which are applied inside of the \Call{PPO}{} routine in line 6, Algorithm \ref{alg:modm:algorithm}.
Specifically, we employ PPO's policy gradient entropy term and its clipped ratio for the policy.

%\subsection{Generalized Lagrangian Advantage Estimation}

%Towards the goal of generalizing PPO, we first need to construct the policy gradient objective for Equation~\ref{eq:tpo:lagrangian}.

%\begin{equation*}
%    \mathcal{L}_i^{PG}(\theta) = \hat{\mathbb{E}} \Big[ \log \pi \hat{\mathbf{A}}^t \Big]
%\end{equation*}
%The \textbf{generalized Lagrangian advantage} is:

%\awesomebox{7cm}{
%TODO: Topologically-Constrained Policy Optimization Algorithm
%}

%\subsection{Optimality and Convergence of TPO}
%
%\awesomebox{4cm}{
%TODO
%}
%
%\begin{proposition}
%    TPO convergence in surrogate objective.
%\end{proposition}
%
%\begin{proof}
%    \awesomebox{15cm}{
%        TODO
%    }
%\end{proof}
%
%
%
%\begin{proposition}
%    For each objective $i$, given all ancestors of $i$ have been solved, TPO using LAR satisfies the constraints.
%\end{proposition}
%
%\begin{proof}
%    \awesomebox{5cm}{
%        TODO
%    }
%\end{proof}
%
%
%%\awesomebox{24cm}{
%%    TODO: Flexible space for padding. Perhaps algorithm block, or extra description.
%%}

\section{Experiments}
\label{sec:experiments}

In this section, we consider experiments in the \emph{multi-objective robot navigation}~\cite{WCicra22} domain, which is fully implemented on a real robot acting in an actual household environment.
Importantly, the main contribution of this paper remains the multi-objective policy gradient's theoretical results and formulation, rather than the implementation of the approach in TPO and a real world navigation domain.
These results are included to provide evidence of the approach's usefulness and build intuitions in modeling objective structures and behavioral customizability using slack.

\subsection{Multi-Objective Robot Navigation Domain}

We consider navigation domains in which a robot is provided with the tuple $\langle \mathcal{M}, \mathcal{L}^0, \mathcal{L}^g, \mathcal{R}^a, \mathcal{R}^m \rangle$.
$\mathcal{M} \in \{0, 1\}^{m \times n}$ is a map, described here as a occupancy grid (or binary matrix), with a starting location $\mathcal{L}^0 \in \mathcal{M}$ and a goal location $\mathcal{L}^g \in \mathcal{M}$.
All this information and the state transitions, which describe the robot's movement dynamics and its interaction with walls or obstacles in the map, are unknown a priori.

The \textit{goal} objective is to navigate from $\mathcal{L}^0$ to $\mathcal{L}^g$ as fast as possible.
The robot receives a $-1$ for any non-goal state, including when interacting with walls or obstacles.
Once the robot reaches the goal $\mathcal{L}^g$, its navigation ends.

Two additional objectives are also provided.
The \textit{avoid} objective is to penalize entering a rectangular region $\mathcal{R}^a$.
The robot receives a $-1$ for each time step in this region.
The \textit{monitor} objective is to encourage entering into a rectangular region $\mathcal{R}^m$.
The robot receives a $+1$ for each time step in this region.
For both objectives, any interaction with walls or obstacles is still met with a penalty of $-1$.

This domain builds on prior work on multi-objective (PO)MDPs for home healthcare robots~\cite{WCicra22}.
In particular, the navigation of a home healthcare robot can be tailored to different homes via the map $\mathcal{M}$; delivering medicine through the assignment of $\mathcal{L}^0$ and $\mathcal{L}^g$; the preference of the human(s) or patients to avoid traversing rooms via $\mathcal{R}^a$; the preference to monitor rooms while navigating via $\mathcal{R}^m$; and so on.

\begin{figure*}[t]
    \centering
    %\awesomebox{4.8cm}{}
    \includegraphics[height=3.5cm,width=3.4cm]{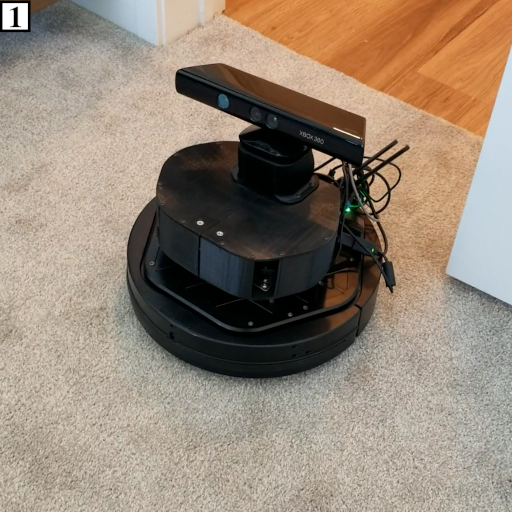}
    \includegraphics[height=3.5cm,width=3.4cm]{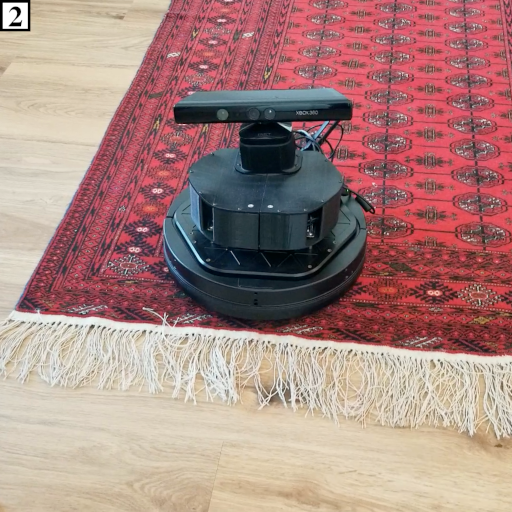}
    \includegraphics[height=3.5cm,width=3.4cm]{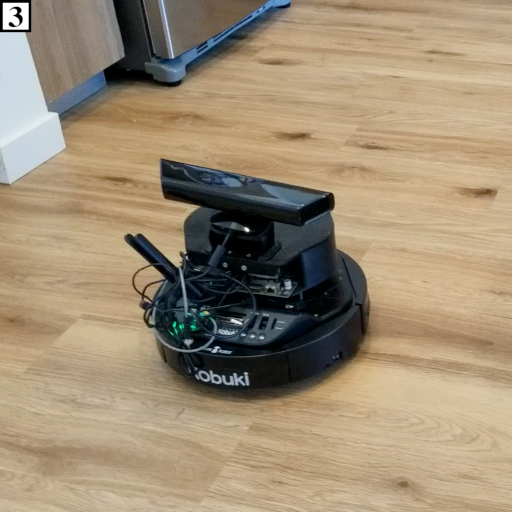}
    \includegraphics[height=3.5cm,width=3.4cm]{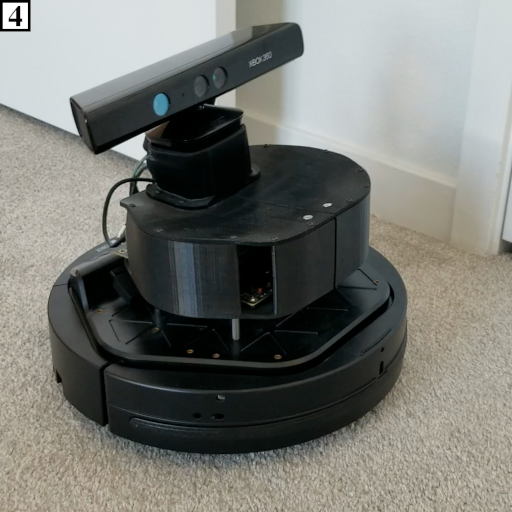}
    \includegraphics[height=3.5cm,width=3.4cm]{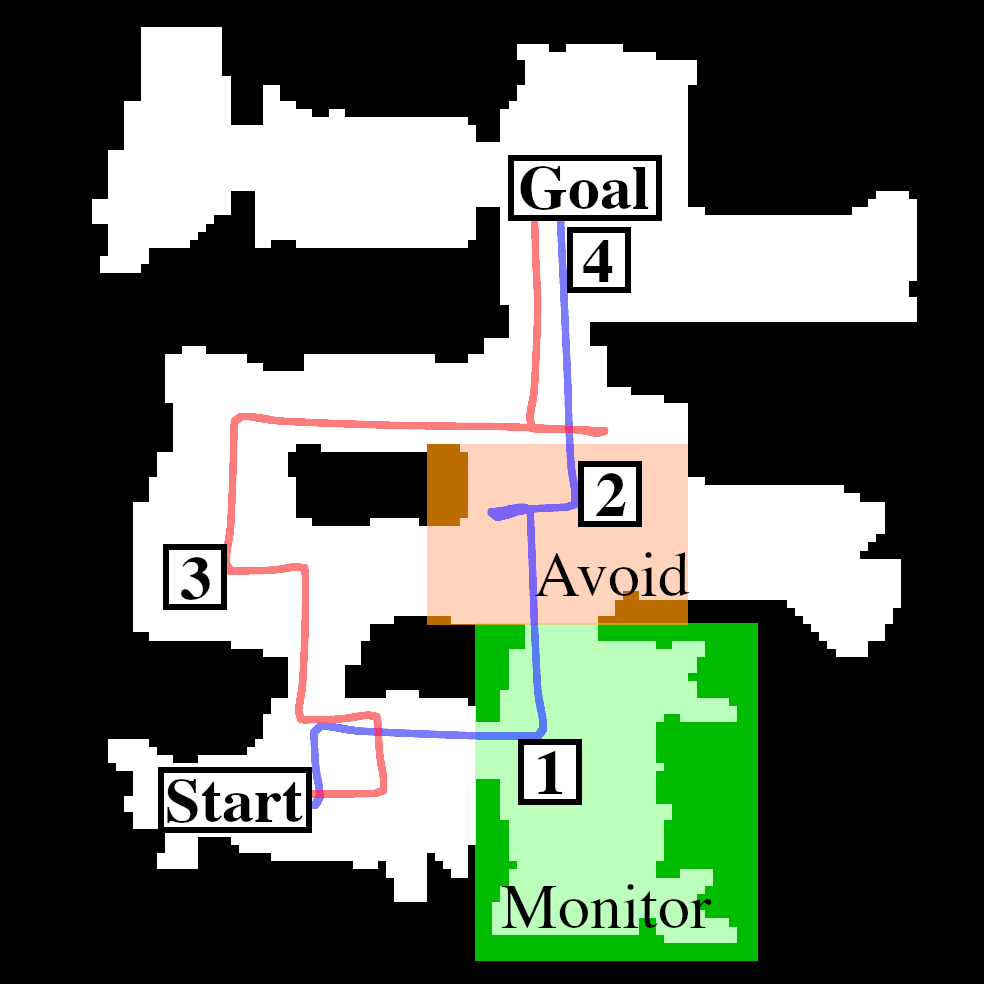}
    \caption{Experiments that implement \emph{home navigation} on a real robot in an actual household environment. This uses the $M \rightarrow A \rightarrow G$ DAG with two paths: (1) constrained-to-monitor (blue path) with $\eta_1 = \eta_2 = 0$, and (2) constrained-to-avoid (red path) with $\eta_1 = 100$ and $\eta_2 = 0$. The start point (S), \textit{goal} point (G), \textit{avoid} region, and \textit{monitor} region are shown.}
    \label{fig:experiments:robot}
    %\vspace{-10pt}
\end{figure*}

\subsection{Experimental Setting}

The thesis of this paper is to propose a novel multi-objective policy gradient formulation for the new TMDP model, for which no algorithm currently exists for continuous state spaces.
Therefore, the theoretical result and formulation must ideally be evaluated by comparing its efficacy at modeling different permutations of multiple objective DAGs and slack assignments.
Thus, the primary metrics must be the values of all three objectives: $V_\text{avoid}$, $V_\text{monitor}$, and $V_\text{goal}$.
The results indicate how effective the approach is at capturing the slack and the preferences encoded by each of the DAGs.

The implementation of this theoretical approach is the approximate TPO algorithm.
For each of the configurations of DAG and slacks, the algorithm is trained for $300000$ iterations.
%Because the critic's value network is approximate, would be a misleading metric to evaluate the effect of constraining ancestors based on allotted slack.
The experiments compute the value for each objective via Monte Carlo simulations using the agent's final policy network.
Analysing convergence is left to future work.
The approach is implemented in Julia 1.6.1 on Ubuntu 18.04.
%Experiments are run on an Intel Core i7-6700HQ CPU at 2.60GHz with 16 GB of RAM.

\subsection{Results and Discussion}

%Table~\ref{tab:experiments:results} show the complete results from experiments on the robot navigation domain.
Figure~\ref{fig:experiments:dag-and-slack} provides results of four distinct constraint DAGs, with each varying the slack values.
This figure illustrates the effect of the topological structure of the constraint DAG and the slack assignments on the three objectives' values.

In Figures~\ref{fig:experiments:dag-and-slack} (a) and (b), we see the desired effect: increasing the slack of the constraining objectives \textit{avoid} ($A$) and \textit{monitor} ($M$) reduces their value and enables the \textit{goal} ($G$) objective to improve its value.
Figure~\ref{fig:experiments:dag-and-slack} (d) shows a similar effect, except with a different fan DAG structure.
In this case, the result DAG structure enables a more drastic change.
In Figure~\ref{fig:experiments:dag-and-slack} (c), the \textit{goal} is the first objective.
We observe now that by increasing the slack of the $G$, the other two objectives ($A$ and $M$) are able to increase their values.

Figure~\ref{fig:experiments:robot} demonstrates the implementation of the approach on an actual robot acting in a real household environment.
%The agent was trained on the $M \rightarrow A \rightarrow G$ DAG with two distinct slack values.
With zero slack, the primary \textit{monitor} objective is favored, forcing the \textit{avoid} objective to experience a penalty.
However when provided slack, the \textit{avoid} objective is able to direct the path away from the region while navigating to the goal.

\section{Conclusion}

This paper presents a policy gradient approach for capturing rich multi-objective preference structures in reinforcement learning.
While primarily a theoretical paper, the method is demonstrated in a real robot navigation domain.
With this paper's established theoretical foundation, future work will introduce applications of this approach for a range of reinforcement learning problems including formulations of curriculum learning and imitation learning.

\newpage

\bibliographystyle{style/IEEEtran}
\bibliography{references}

\end{document}